\documentclass{article}

\usepackage{multirow}
\usepackage{graphicx}
\usepackage{amsmath}
\usepackage{amsthm}
\usepackage{amssymb}
\newtheorem{theorem}{Theorem}
\usepackage{adjustbox}
\usepackage{subcaption}
\usepackage{booktabs}
\usepackage{float}
\usepackage[preprint]{neurips_2026}

\usepackage[utf8]{inputenc} 
\usepackage[T1]{fontenc}    
\usepackage{hyperref}       
\usepackage{url}            
\usepackage{booktabs}       
\usepackage{amsfonts}       
\usepackage{nicefrac}       
\usepackage{microtype}      
\usepackage{xcolor}         

\title{
Commutator-Induced Uncertainty in 
VAEs}

\author{%
  Tahereh Dehdarirad$^{1}$,
  Michael Felsberg$^{1}$,
  Gabriel Eilertsen$^{2}$,
  Ziliang Xiong$^{1}$ \\
  $^{1}$Computer Vision and Learning Systems (CVL), Linköping University, Sweden \\
  $^{2}$Department of Science and Technology, Linköping University, Sweden \\
  \\
  \texttt{\{tahereh.dehdarirad, michael.felsberg, zilian.xiong\}@liu.se} \\
  \texttt{gabriel.eilertsen@liu.se}
}

\begin{document}

\maketitle

\begin{abstract}
Variational autoencoders (VAEs) 
struggle to reflect non-commutativity.
Symmetry-aware VAEs 
address this by
enforcing commutativity 
through 
algebraic
regularization. While appropriate for commutative transformation groups, these
constraints impose a 
prior that suppresses meaningful non-commutative
structure 
even if intrinsic to the data. We argue that
non-commutativity should not be eliminated, but explicitly diagnosed and
reflected in reconstruction behaviour. We introduce a novel 
approach to Lie Group
VAEs that unifies geometric and algebraic perspectives on uncertainty
quantification, while isolating discrete generative factors from continuous geometric transformations. In Phase~1, the model is trained without structural constraints
while we empirically measure algebraic non-commutativity via finite Baker--Campbell--Hausdorff deviations and decoder order sensitivity via reconstruction order-swap tests. These diagnostics reveal a systematic scale
mismatch between latent non-commutativity and reconstruction behaviour under
unconstrained training. In Phase~2, we introduce a deformation--stability
constraint with a data-driven calibration constant that aligns decoder
sensitivity with algebraic non-commutativity. 
We evaluate the framework on dSprites, 3DShapes, 3DCars, and CelebA and compare it against both generic and symmetry-aware baselines, including $\beta$-VAE, CLG-VAE, and CFASL. Across the synthetic benchmarks, our method improves reconstruction quality and yields decoder-level behaviour that is more consistent with latent non-commutative structure. Qualitative analyses show clearer order-dependent latent compositions and more semantically stable reconstructions. 
On CelebA, our model yields more faithful reconstructions and 
identity-preserving, factor-specific latent traversals than CFASL, while also exhibiting meaningful order-dependent interactions between learned latent directions.


\end{abstract}

\vspace{-1em}
\section{Introduction}
Modern generative models rely on learned latent representations to capture the
factors of variation underlying high-dimensional data. In contemporary vision,
this paradigm extends well beyond classical latent-variable models: foundation
models and state-of-the-art generative systems, including latent diffusion
pipelines, derive much of their success from representations that compress,
structure, and disentangle semantic and geometric information. Variational
Autoencoders (VAEs) remain a central formal framework for studying such latent
spaces, since they combine probabilistic inference with an explicit generative
decoder and therefore provide a natural setting in which to analyse both
representation geometry and uncertainty.
Uncertainty quantification is essential for trustworthy machine learning,
particularly for generative models. VAEs 
map complex input data into a
lower-dimensional latent space while regularizing these latents to follow
a prior distribution. In their basic formulation, latent variables are
assumed to be independent or only linearly correlated, yet natural images
display strong nonlinear dependencies between generative factors
\cite{DBLP:journals/corr/abs-2404-15390}. While VAEs provide approximate
uncertainty estimates through their probabilistic framework, these
estimates are often miscalibrated \cite{10.1007/978-3-030-86520-7_6} and may
fail to reflect either the true geometry of the latent space
\cite{chadebec2020geometry} or the informativeness of the input data
\cite{10.1007/978-3-030-86520-7_6}. While appropriate for Abelian (commutative) transformation  groups, such assumptions exclude or suppress non-commutative interactions that arise naturally in many generative processes. This limitation is fundamental. Independent generative factors may interact in a non-commutative manner, and enforcing commutativity imposes a strong algebraic prior that can distort the learned transformation structure.  In our setting, non-commutativity models interactions between continuous generative factors whose joint effect depends on application order. This is relevant whenever latent transformations cannot be faithfully decomposed into independent commuting directions, so that composing two symmetry actions produces a residual effect not captured by a factorized latent model. Moreover, geometric uncertainty formulations based on symmetric covariance or information matrices capture curvature and sensitivity but cannot represent order-dependent effects. As a consequence, they systematically underestimate uncertainty in settings governed by non-commutative transformations. This oversight has direct implications for model reliability. Even properly calibrated systems with correctly estimated uncertainty are  affected by non-commutative transformation structure. In such settings, inherent uncertainty increases with commutator strength, causing standard calibration procedures to systematically underestimate reconstruction uncertainty. As we demonstrate empirically,
in the non-commutative case, order-dependent reconstruction effects grow faster than can be explained by the accumulated effects of partial actions, leading standard, factorized models to under-represent the resulting reconstruction ambiguity.  A second issue is the treatment of discrete generative factors. Lie groups model
continuous transformations, so applying them directly to categorical variation
can conflate discrete changes with continuous geometry. To address this, we introduce an explicit discrete latent variable, inferred
via a Gumbel--Softmax relaxation \cite{10.5555/3326943.3327009}, and keep it
decoupled from the Lie algebra. By conditioning the decoder on this variable
and holding it fixed during diagnostics, measured order effects reflect
continuous transformations rather than categorical variation. In this work, we argue that non-commutativity should not be eliminated, but explicitly diagnosed and reflected in uncertainty estimates. We introduce a diagnostic-driven framework for Lie Group VAEs that separates intrinsic algebraic non-commutativity from decoder-induced order effects at the level of reconstruction. Building on finite Baker--Campbell--Hausdorff (BCH) deviations and order-swap reconstruction tests, we reveal a systematic scale mismatch between latent non-commutativity and reconstruction behaviour under unconstrained
training. To address this mismatch, we extend the geometric perspective on VAEs to general Lie groups by introducing a deformation--stability principle.
This principle establishes a lower bound on reconstruction uncertainty in terms of the commutator norm of Lie algebra generators, analogous in spirit to uncertainty relations arising from non-commuting operators. Here, uncertainty refers to commutator-induced reconstruction sensitivity rather than posterior predictive variance.



This paper makes the following contributions: 
We introduce a {\bf novel diagnostic-driven framework} for Lie Group VAEs that empirically separates intrinsic algebraic non-commutativity from decoder-induced order effects, while isolating discrete generative factors from continuous geometric transformations. 
%
We extend {\bf geometric uncertainty modelling in VAEs} to general Lie
    groups via a deformation--stability principle, establishing a
    calibrated lower bound on reconstruction uncertainty induced by
    non-commuting latent generators. 
%
We use a {\bf pull-back metric on the Lie algebra} to quantify local decoder
sensitivity and combine it with finite non-commutativity diagnostics through
deformation--stability, yielding uncertainty estimates that jointly reflect
algebraic structure and geometric distortion.

\vspace{-1em}
\section{Related work}
\vspace{-1em}
\paragraph{Geometric perspectives}
Recent work has developed a \emph{geometric perspective} on VAEs. 
This perspective endows the latent space with a
Riemannian structure induced by the decoder or variational posterior.
However, Riemannian metrics are symmetric positive-definite bilinear forms by definition; they capture curvature and sensitivity but not asymmetric order-dependent effects.
\vspace{-1em}
\paragraph{Symmetry-aware representations learning}
A parallel line of work explores \emph{symmetry-aware latent representations}, in which
transformations are modelled explicitly through group structure \cite{Chau2020DisentanglingIW, 10.5555/3495724.3496839, Jung2024CFASLCF}.  These approaches have substantially enriched representation learning, but
typically rely, explicitly or implicitly, on commutativity or second-order
independence assumptions \cite{Zhu2021CommutativeLG}, or capture
non-commutative group actions only in restricted or partial settings
\cite{10.5555/3495724.3497379}. 
\vspace{-1em}
\paragraph{Uncertainty quantification in VAEs.}
Prior work has studied \emph{uncertainty in VAEs} from both encoder- and decoder-centric
perspectives. Encoder-based approaches analyze sensitivity of latent
representations via pullback metrics or variance structure
\cite{khan2023adversarial, zheng2022learning}, while decoder-based formulations
derive uncertainty from the Jacobian of the generative map or Fisher information
\cite{Arvanitidis2017LatentSO, Zacherl2021ProbabilisticAU, chadebec2020geometry}.
These methods capture curvature, anisotropy, and epistemic effects, but rely on
symmetric covariance or information matrices and therefore ignore 
order-dependent or non-commutative effects.

\vspace{-1.0em}
\section{Methodology}
\vspace{-1.2em}
\label{sec:methodology}

We adapt the methodology of \cite{Zhu2021CommutativeLG} and extend it to a
non-commutative setting with an additional discrete factor. The latent space is
factorized into continuous Lie-group variables, which govern geometric
transformations, and discrete variables, such as shape, whose categories define
learned canonical representations transformed before decoding. For each
generator pair \((A_i,A_j)\), we measure algebraic non-commutativity using a
finite BCH deviation \(D_{ij}\) and decoder-level order sensitivity using an
order-swap reconstruction difference \(\Delta_{ij}\). All diagnostics keep the discrete code fixed, so \(\Delta_{ij}\) reflects
continuous order effects rather than discrete variation. Training follows a fixed two-phase curriculum. Phase~1 trains without the
deformation--stability constraint while accumulating \(D_{ij}\) and
\(\Delta_{ij}\). Phase~2 fine-tunes the same model with the constraint
active, using a scale initialized from a fixed percentile of the empirical
ratios \(\Delta_{ij}/(D_{ij}+\epsilon)\). Figure~\ref{fig:nclgvae} illustrates the full architecture, including the discrete--continuous factorization and the auxiliary shape-prediction pathway.

\begin{figure}[t]
  \centering
  \includegraphics[width=\linewidth]{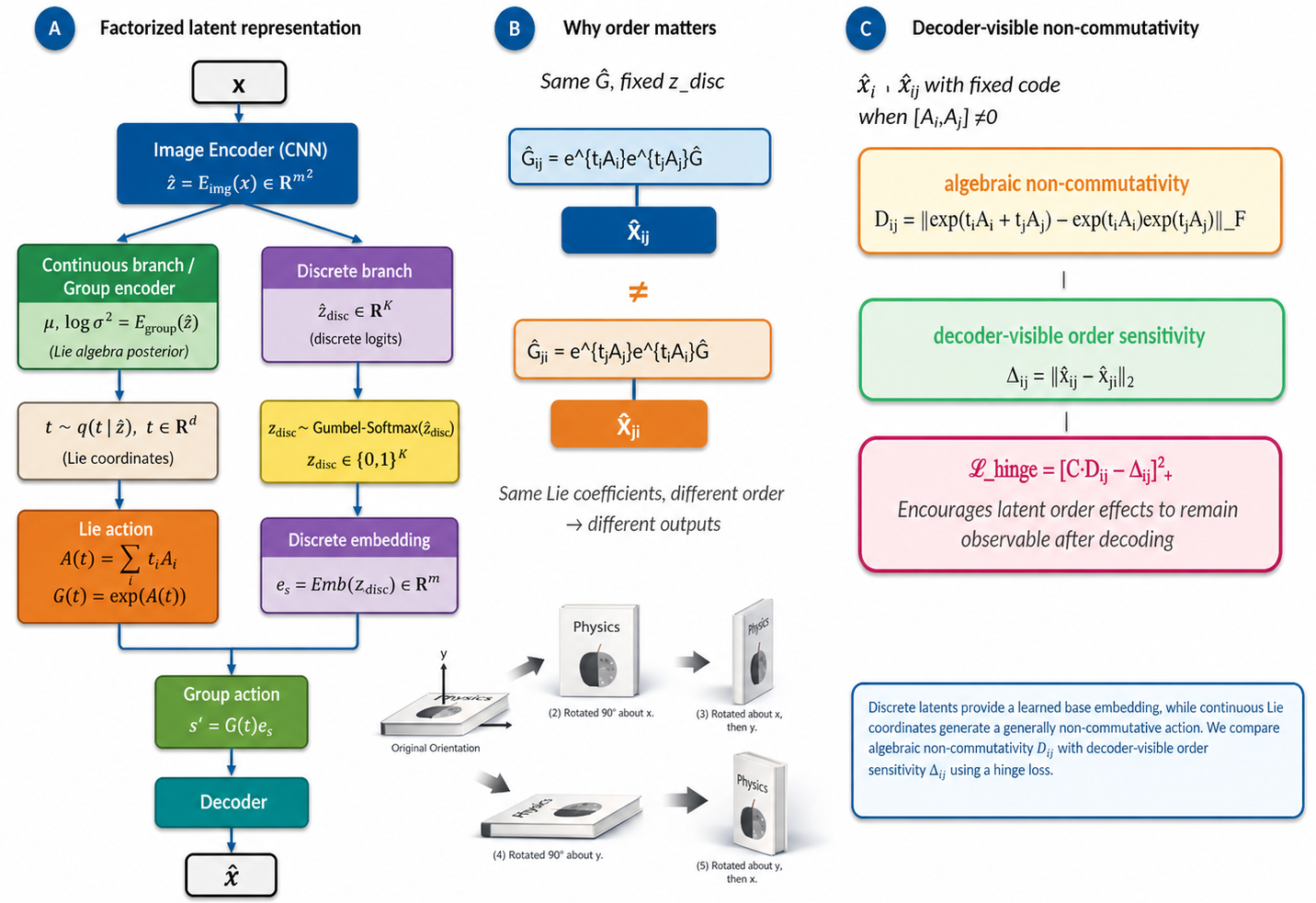}
  \caption{
  Commutator-aware VAE with discrete--continuous factorization. Discrete latents provide a learned base embedding, while continuous Lie coordinates induce transformations before decoding.
  Diagnostics compare algebraic and decoder-visible non-commutativity with the discrete code fixed; ordered products are used only in the pairwise order-swap diagnostics.
  }
  \label{fig:nclgvae}
  \vspace{-0.8em}
\end{figure}

\vspace{-1em}
\subsection{Unconstrained training and analysis: Phase 1}
\label{sec:phase 1}
We train a Lie Group VAE (CLG-VAE) \cite{Zhu2021CommutativeLG} without Hessian or
commutator penalties. Given an input image $x \in \mathcal{X}$, an image encoder
$E_{\mathrm{img}}$ produces representation
$\hat z = E_{\mathrm{img}}(x)$. This representation is mapped by a group
encoder $E_{\mathrm{group}}$ to a variational posterior
$(\mu, \log \sigma^2)$ over continuous Lie coordinates. We sample
$t = \mu + \sigma \odot \varepsilon$ with $\varepsilon \sim \mathcal{N}(0,I)$,
construct the Lie-algebra element $A(t)=\sum_j t_jA_j$, and form the group
element $G(t)=\exp(A(t))$, with $z=\mathrm{vec}(G(t))$ denoting its vectorized
group representation. In parallel, a separate discrete encoder produces logits
$\hat z_{\mathrm{disc}} \in \mathbb{R}^K$, from which a discrete latent variable
$z_{\mathrm{disc}} \in \{0,1\}^K$ is sampled using a Gumbel--Softmax relaxation. The image decoder $D_{\mathrm{img}}$ is conditioned on this sampled discrete
latent by embedding it as $e_s=\mathrm{Emb}(z_{\mathrm{disc}})$ and applying the
Lie-group action as $s'=G(t)e_s$ prior to image decoding. The core training objective is the unconstrained ELBO:


\vspace{-0.8em}
\[
\mathcal{L}_{\mathrm{VAE}}(x)
=
\ell_{\mathrm{recon}}(D_{\mathrm{img}}(s'),x)
+
\alpha \| z-\hat z \|_2^2
+
\beta \mathrm{KL}(q(t \mid \hat z)\|\mathcal{N}(0,I)).
\]
Here, \(\ell_{\mathrm{recon}}\) is the image reconstruction loss, \(\|z-\hat z\|_2^2\) enforces feature-space consistency rather than an identity constraint on \(G(t)\), and \(\beta\) scales the KL regularizer.
\vspace{-0.5em}

\paragraph{Discrete latent regularization.}
For models with a discrete latent variable, we add two auxiliary losses on the
discrete pathway: an InfoGAN-style prediction loss that makes the discrete code
recoverable from reconstructions, and a batch-usage regularizer that discourages
collapse to a single category. These losses act only on the discrete variables
and do not affect the Lie-group diagnostics or deformation--stability loss.
Details are given in Appendix~\ref{app:discrete_losses}.

\vspace{-0.5em}


\paragraph{Two-level non-commutativity diagnosis: discrepancy analysis.}
To distinguish intrinsic algebraic non-commutativity from decoder-induced order
effects, we introduce a two-level diagnostic framework. The foundation lies in
the BCH formula~\cite{Higham2008-ch10}, which
characterizes the deviation between \(\exp(A_i+A_j)\) and
\(\exp(A_i)\exp(A_j)\). When \([A_i,A_j]=0\), the generators commute and the two
expressions coincide; otherwise, higher-order BCH terms appear, with leading
contribution governed by the commutator. Rather than measuring
\(\|[A_i,A_j]\|_F\) directly, we use a finite-sample group-level deviation as an
operational proxy. Recent work by Godavarti~\cite{godavarti2025directionalnoncommutativemonoidalstructures}
formalizes order sensitivity \((A_iA_j\neq A_jA_i)\) as a structural property
of deep architectures. Motivated by this operational view, we define two
complementary tests for each generator pair \((A_i,A_j)\) to distinguish
group-induced from decoder-induced order effects:


\textbf{(1) Algebraic test (BCH deviation).}  
We compute
\begin{equation}
D_{ij}
=
\bigl\|
\exp(t_i A_i + t_j A_j)
-
\exp(t_i A_i)\exp(t_j A_j)
\bigr\|_F ,
\end{equation}
which measures finite Lie-group non-commutativity induced by the generators. Small $D_{ij}$ indicates near-commutativity at the group level. In practice, \(D_{ij}\) is evaluated per sample using \(t\sim q(t\mid \hat z)\) and then averaged over the diagnostic batch.

\textbf{(2) Decoder swap test.}
Here, we measure the decoder's response to permuted transformation order while holding the discrete code fixed.
Let \(G_k=\exp(t_kA_k)\), \(\hat G=\mathrm{mat}(\hat z)\), and
\(e_s=\mathrm{Emb}(z_{\mathrm{disc}})\), with \(\hat z=E_{\mathrm{img}}(x)\). Then
\begin{equation}
\small
\Delta_{ij}
=
\mathbb{E}_{x,t,z_{\mathrm{disc}}}
\!\left\|
D_{\mathrm{img}}(G_iG_j\hat G e_s)-
D_{\mathrm{img}}(G_jG_i\hat G e_s)
\right\|_2 ,
\end{equation}
where \(x\sim p_{\mathrm{data}}\), \(t\sim q(t\mid \hat z)\), and
\(z_{\mathrm{disc}}\sim q_{\mathrm{disc}}(\cdot\mid x)\).
Here, \(\Delta_{ij}\approx 0\) indicates decoder order-invariance, while larger
values indicate decoder sensitivity. Jointly, \(D_{ij}\) and \(\Delta_{ij}\)
categorize pairs as commutative \((D_{ij}\approx0,\Delta_{ij}\approx0)\),
algebraic-only non-commutative \((D_{ij}>0,\Delta_{ij}\approx0)\), or strongly
non-commutative \((D_{ij}>0,\Delta_{ij}>0)\).

\vspace{-0.5em}

\subsection{Manifold-based uncertainty regularizer with deformation stability: Phase~2}
\label{subsec:uncertainty_regularizer}

The core principle guiding Phase~2 is \emph{deformation--stability}: 
geometric sensitivity in reconstruction space should scale consistently
with algebraic non-commutativity in latent space.
Intuitively, when latent transformations fail to commute, their joint
action induces intrinsic deformations of the latent manifold, which the
decoder should express as increased local uncertainty rather than suppress. The link between commutator norms and deformation is mediated by the BCH
expansion. For two generators \(A_i\) and \(A_j\), small latent transformations
satisfy \(\exp(t_iA_i)\exp(t_jA_j)=\exp\!\left(t_iA_i+t_jA_j+\tfrac12 t_it_j[A_i,A_j]+O(\|t\|^3)\right)\).Thus, the commutator governs the leading non-additive correction to
independent composition. When \([A_i,A_j]\neq 0\), this correction introduces
an intrinsic mixed term, making the latent displacement order-dependent.
After decoding, this appears as a difference between reconstructions under
permuted transformation order. Hence, \(\|[A_i,A_j]\|_F\) provides an
algebraic proxy for intrinsic non-commutative deformation. We operationalize this principle by linking reconstruction sensitivity to
generator non-commutativity through a deformation--stability framework
inspired by \cite{bronstein2021geometric}. Specifically, we measure how small
perturbations along latent generator directions propagate through the
Lie-group action and decoder, yielding an empirical notion of
\emph{manifold-aware uncertainty}. This uncertainty is regularized to remain
consistent with the algebraic structure identified in Phase~1.

\vspace{-0.5em}

\paragraph{Theoretical foundation: From torsion to geometric uncertainty.}
We motivate our Phase~2 regularization through an analogy between algebraic non-commutativity and geometric deformation, drawing on two complementary theoretical perspectives.\\
%
\textbf{Non-commutativity as torsion (analogy).}
    In differential geometry, non-commutativity of vector fields is formalized through the torsion tensor of a connection, defined by the Lie bracket $T(X,Y) = [X,Y]$ \cite{doi:10.1137/19M1252879}. While our model does not instantiate a full post-Lie or connection-based geometric structure, this perspective provides a useful conceptual correspondence:
a) the learned generators $\{A_i\}$ correspond to elements of a Lie algebra $\mathfrak{g}$,
b) the decoder mapping induces a geometry on latent space via its local Jacobian, and
c) generator commutators $[A_i, A_j]$ act as torsion-like quantities, with $\|[A_i,A_j]\|_F$ capturing the intrinsic non-commutativity of the generator pair. As with rotations about different axes in \(\mathbb{R}^3\), two latent symmetry
directions need not commute: changing their order can induce a residual
displacement that appears after decoding as a difference between reconstructed
outputs.

%
\textbf{Deformation–stability principle.}
    Following the deformation-stability framework of \cite{bronstein2021geometric}, we require that changes in the output of a representation be controlled by the magnitude of the underlying transformation. Formally, for a transformation $\tau$ acting via a representation $\rho$, deformation stability requires
    \[
    \|f(\rho(\tau) z) - f(z)\| \leq C \, c(\tau) \, \|z\|,
    \]
    where $c(\tau)$ measures the size of the deformation and $C>0$ is a global stability constant. In our Lie-group setting, non-commuting generators $A_i$ and $A_j$ induce deformations of the group action that cannot be reduced to independent effects. We therefore associate the deformation magnitude $c(\tau)$ with a torsion-like quantity proportional to $\|[A_i,A_j]\|_F$, and measure reconstruction sensitivity using a joint, worst-case decoder response $U^{\mathrm{manifold}}_{ij}$. This replaces factorized uncertainty measures (e.g.\ $\sigma_i \sigma_j$), which implicitly assume independence and systematically underestimate uncertainty for entangled generator pairs.
\begin{theorem}[Geometric interpretation of reconstruction sensitivity]
Let $x_{\mathrm{rec}}=D_{\mathrm{img}}(s'(t))$ with
$s'(t)=G(t)e_s$, where $G(t)=\exp(A(t))$ and the discrete embedding $e_s$ is held fixed.
The joint sensitivity measure
\begin{equation}
U_{ij}^{\mathrm{manifold}}
=
\sigma_{\max}\!\left(
\left[
\frac{\partial x_{\mathrm{rec}}}{\partial t_i},
\frac{\partial x_{\mathrm{rec}}}{\partial t_j}
\right]
\right)
\end{equation}

quantifies the worst-case sensitivity of reconstructions to changes in  generator parameters $(t_i, t_j)$, measured through the Riemannian geometry induced by the decoder.
\end{theorem}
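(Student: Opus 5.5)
The claim is an interpretation, so the plan is to make it precise. I will show that $U_{ij}^{\mathrm{manifold}}$ is the square root of the largest eigenvalue of the pull-back metric that the decoder induces on the $(t_i,t_j)$-plane. That eigenvalue is the worst-case first-order stretch of a unit displacement in generator coordinates. The route is the chain rule through the composite map $t\mapsto A(t)\mapsto G(t)=\exp(A(t))\mapsto s'(t)=G(t)e_s\mapsto x_{\mathrm{rec}}$, with $e_s$ fixed as in the statement. I assume $D_{\mathrm{img}}$ is differentiable, which holds for standard networks with smooth activations, or almost everywhere for ReLU.

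\textbf{Step 1 (Jacobian columns).} Write $J_D=\partial D_{\mathrm{img}}/\partial s'$. Then
\[
\frac{\partial x_{\mathrm{rec}}}{\partial t_k}=J_D\,\frac{\partial G(t)}{\partial t_k}\,e_s .
\]
Here $\partial G/\partial t_k=\int_0^1 e^{\alpha A(t)}A_k e^{(1-\alpha)A(t)}\,d\alpha$ by the standard derivative of the matrix exponential. This form makes explicit that the columns depend on all generators through $A(t)$. In particular, the $i$- and $j$-columns are not independent whenever $[A_i,A_j]\neq 0$; this is the link to the BCH discussion above. Define $J_{ij}=[\partial_{t_i}x_{\mathrm{rec}},\partial_{t_j}x_{\mathrm{rec}}]\in\mathbb{R}^{n\times 2}$.

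\textbf{Step 2 (pull-back metric).} The decoder pulls the Euclidean metric of image space back to
\[
M_{ij}(t)=J_{ij}^\top J_{ij},
\]
which is the restriction of $J^\top J$ to the $(t_i,t_j)$-coordinates. This is the Riemannian metric of \cite{Arvanitidis2017LatentSO} adapted to Lie coordinates. It is symmetric positive semidefinite, and positive definite wherever $J_{ij}$ has full rank.

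\textbf{Step 3 (variational characterization).} By Taylor expansion, for a displacement $\delta=(\delta_i,\delta_j)$ we have $\|x_{\mathrm{rec}}(t+\delta)-x_{\mathrm{rec}}(t)\|=\|J_{ij}\delta\|+o(\|\delta\|)$. By Courant--Fischer,
\[
\sup_{\|\delta\|=1}\|J_{ij}\delta\|^2=\lambda_{\max}(M_{ij})=\sigma_{\max}(J_{ij})^2 .
\]
Hence $U_{ij}^{\mathrm{manifold}}$ is exactly the worst-case local Lipschitz factor of reconstruction with respect to joint motion in $(t_i,t_j)$. Equivalently, it is the largest length distortion of the induced metric relative to the flat metric on the parameters. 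It also dominates the factorized quantities: $U_{ij}\ge\max(\|\partial_{t_i}x_{\mathrm{rec}}\|,\|\partial_{t_j}x_{\mathrm{rec}}\|)$, and $U_{ij}^2\ge\tfrac12(\|\partial_{t_i}x_{\mathrm{rec}}\|^2+\|\partial_{t_j}x_{\mathrm{rec}}\|^2)+|\langle\partial_{t_i}x_{\mathrm{rec}},\partial_{t_j}x_{\mathrm{rec}}\rangle|$. The cross term captures correlated, entangled directions that products such as $\sigma_i\sigma_j$ ignore.

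\textbf{Main obstacle.} The difficulty is not computational but interpretive. The statement has no precise quantitative content until "quantifies" and "Riemannian geometry induced by the decoder" are fixed. Step 3 pins these down as the operator norm of $J_{ij}$ and the top eigenvalue of the pull-back metric, and I will state that identification as the theorem's formal content. A secondary issue is rank-deficient $J_{ij}$, where the metric is degenerate. I would handle it by noting that the variational formula for $\sigma_{\max}$ remains valid, so only the "Riemannian" terminology needs the qualifier "pseudo-".
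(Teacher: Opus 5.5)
Your proposal is correct and follows essentially the same route as the paper. The chain rule factors the Jacobian block as $J_{ij}=J_D Z$ with $Z=[\partial_{t_i}s',\partial_{t_j}s']$, the Gram matrix $J_{ij}^\top J_{ij}=Z^\top M(s')Z$ is identified with the decoder-induced pull-back metric, and $U^{\mathrm{manifold}}_{ij}=\sqrt{\lambda_{\max}(Z^\top M(s')Z)}$ is read off. The differences are only presentational: the paper names the decoder metric $M(s')=J_D^\top J_D$ on the decoder input space before pulling it back through $Z$, whereas you pull back directly to the $(t_i,t_j)$-plane (the same matrix), and your Taylor/Courant--Fischer step spells out the ``worst-case'' reading that the paper only asserts.
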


\noindent
A proof is provided in Appendix~\ref{app:theorom 1}.

\textbf{Stability postulate (empirical form).}
Following the analogy between commutators and torsion~\cite{doi:10.1137/19M1252879} and the deformation--stability principle~\cite{bronstein2021geometric}, we postulate that geometric sensitivity in reconstruction space should not be
suppressed below a calibrated lower bound implied by the degree of algebraic non-commutativity present in latent space. Rather than enforcing deformation--stability using infinitesimal commutator norms and Jacobian-based sensitivities, we adopt finite-action empirical proxies that are directly computable during training. For each generator pair $(i,j)$, we evaluate the previously defined
group-level non-commutativity proxy $D_{ij}$ and the decoder-level
order-sensitivity proxy $\Delta_{ij}$, which together characterize the
relationship between latent algebraic structure and reconstruction behaviour. Using these proxies, we enforce the empirical deformation--stability condition:
\begin{equation}
\Delta_{ij} \;\ge\; C \cdot D_{ij},
\label{eq:empirical_deformation_stability}
\end{equation}
where $C>0$ is a global stability constant. This inequality is not a derived result but a modelling assumption, enforced during Phase~2 training to ensure consistency between latent group structure and reconstruction behaviour.
This principle enforces a lower bound on reconstruction sensitivity induced by the level of non-commutativity of generators. The analogy is reminiscent of the Heisenberg uncertainty principle, where non-commuting operators impose irreducible uncertainty, but here it serves as a modelling constraint rather than a physical law. 
\vspace{-0.8em}
\paragraph{Deformation hinge loss}
\label{subsec:uncertainty_regularizer}
We enforce the deformation--stability principle using a one-sided hinge 
that penalizes cases where the decoder fails to reflect latent non-commutativity.
For each generator pair $(i,j)$, the loss is defined as
\begin{equation}
\label{eq:pairwise-hinge-loss}
\ell_{ij}
=
\big[\max\big(0,\; C\,\bar D_{ij} - \bar \Delta_{ij}\big)\big]^2 .
\end{equation}
This penalty is active only when the observed decoder order sensitivity
$\bar\Delta_{ij}$ is smaller than the level implied by the algebraic
non-commutativity $\bar D_{ij}$.
In this regime, the decoder behaves as if the transformations commute,
despite non-commutativity in the latent generators. The hinge therefore enforces a lower bound on reconstruction sensitivity without constraining or amplifying decoder responses beyond this minimum.
A central component of our method is a scalar constant $C>0$ that relates
\emph{algebraic non-commutativity} in latent space to its observable effect
in the decoder. The quantities \(D_{ij}\) and \(\Delta_{ij}\) need not be comparable in absolute scale, since \(D_{ij}\) is measured in latent/group space whereas \(\Delta_{ij}\) is measured in reconstruction space. The constant \(C\) therefore calibrates this cross-space scale mismatch, absorbing the typical transfer from latent non-commutative deviation to observable reconstruction discrepancy. 
Rather than fixing \(C\) as a hyperparameter, we estimate it empirically from data and refine it during training; details are provided in Appendix~\ref{app:Empirical calibration of C}.

\vspace{-1em}
\section{Experiments}
\label{subsec:exp_setup}
\paragraph{Datasets.}
\vspace{-1em}
We evaluate our method on the widely used synthetic datasets \textbf{dSprites} \cite{higgins2017betavae},
\textbf{3DShapes} \cite{3dshapes18}, and \textbf{3DCars} \cite{reed2015deep}. These datasets provide controlled variation over
generative factors, including both continuous geometric attributes and
discrete object-level factors. This makes them well suited to our setting,
since they support the study of structured latent transformations while making
it easier to analyse order-dependent interactions between continuous latent
transformations and to limit confounding from categorical variation.
Moreover, these datasets are standard benchmarks in prior work on
symmetry-aware and structured latent representation learning, including the
baselines we compare against. This enables direct comparison under a shared
experimental protocol. We additionally evaluate on \textbf{CelebA} \cite{liu2015deep} as a larger,
non-synthetic dataset to assess whether the proposed framework remains
effective beyond synthetic benchmarks and generalizes to more realistic visual
data. Further dataset details and preprocessing procedures are provided in Appendix \ref{app:dataset_details}

\vspace{-1em}
\paragraph{Model variants and baselines.}
We evaluate the proposed framework through three internal variants of our
model, in addition to three external baselines. The first internal variant,
reported as \textbf{LG-VAE (mixed variables)} in
Table~\ref{tab:quantitative_results}, uses a joint latent representation for both
continuous and discrete factors and is trained without commutative
constraints.  This is the CLG VAE architecture, which is trained without Hessian or commutator penalties as described in Section~\ref{sec:phase 1}; The second internal variant, reported as \textbf{Ours (P1 only, P1 epochs)} uses the proposed separate continuous--discrete latent factorization and is trained only for the Phase~1 duration, without
commutativity or deformation--stability regularization. \textbf{Ours (P1 only, P1+P2 epochs)} is a duration-matched Phase~1 control: it uses the same architecture and  factors as the full model and is trained for the same total number of epochs as P1+P2, but never enables the Phase~2 deformation--stability constraint. \textbf{Ours (P1+P2)} starts from the same \textbf{Ours (P1 only, P1 epochs)} setting and then continues
training with the Phase~2 deformation--stability hinge enabled. This is the full
proposed method.  As external baselines, we consider \textbf{\(\beta\)-VAE} \cite{higgins2017betavae}, \textbf{CLG-VAE} \cite{Zhu2021CommutativeLG},
and \textbf{CFASL} \cite{Jung2024CFASLCF}. \(\beta\)-VAE provides a standard VAE baseline with
factorized latent regularization but without explicit symmetry structure,
allowing us to compare against a widely used model that captures
global latent organization while not modelling group actions or
non-commutative interactions explicitly. By contrast, CLG-VAE and CFASL are
symmetry-aware baselines that impose commutative structure on latent
generators. These latter baselines are relevant to our objective
because they instantiate the 
assumption that latent symmetry
generators should commute, whereas our framework is designed to retain and
measure non-commutative interactions. 
Detailed implementation, training hyperparameters, and evaluation settings are provided in Appendix \ref{app:exp_setting} 
\paragraph{Quantitative evaluation.}Our primary evaluation focuses on whether latent non-commutativity is reflected
in the reconstruction
. During training, we monitor the calibrated
deformation--stability ratio
\[
R_{ij} = \frac{\Delta_{ij}}{C D_{ij}+\epsilon},
\qquad
\bar R = \mathbb{E}_{i<j}[R_{ij}],
\]
which provides a dynamic diagnostic of whether decoder-level order sensitivity tracks latent algebraic non-commutativity across the transition from Phase~1 to Phase~2. Values below one indicate that the decoder under-reflects the non-commutativity measured in latent space, while values close to or above one indicate consistency with the calibrated deformation--stability lower bound. For the final model diagnostics, we report reconstruction loss and the FactorVAE metric (FVM) \cite{pmlr-v80-kim18b} , the latter serving as an auxiliary representation-quality measure to verify that the proposed constraint does not degrade standard factor recovery. The stability violation and calibrated empirical scale \(C_{\mathrm{emp}}\) are provided in the diagnostic plots.

\vspace{-1em}
\paragraph{CelebA evaluation}
On CelebA, we mainly focus on qualitative fidelity and latent-structure analysis, since the dataset lacks clean ground-truth generative factors, while also reporting FID as a complementary measure of generative quality. We assess reconstruction quality by comparing each image with its reconstruction, latent traversals by varying one latent dimension at a time, and order-dependent interactions by reversing pairs of latent transformations with different step magnitudes and comparing the outputs.


\vspace{-1em}
\section{Results}
\vspace{-1em}
\subsection{Synthetic Benchmarks}

\vspace{-1em}

\subsubsection{Quantitative analysis: Phase-aware constraint diagnostics.}

Figures~\ref{fig:cars3d_phase_diag} and \ref{fig:3dshapes_phase_diag}
summarize the training dynamics of the phase-aware constraint on Cars3D and
3DShapes; the corresponding dSprites diagnostic is provided in
Appendix~\ref{app:phase_diagnostics_additional}. The top panel compares decoder
deformation \(\Delta_{ij}\) with a consistently scaled latent
non-commutativity diagnostic, while the middle panel reports the
deformation--stability ratio \(\bar{R}\). Since the constraint imposes a
calibrated lower bound, the relevant regime is \(\bar{R}\geq 1\): Cars3D remains
clearly above this boundary during Phase~2, whereas 3DShapes stabilizes closer
to it, indicating tighter satisfaction of the same bound. The bottom panel shows
that reconstruction loss remains stable or decreases through Phase~2, suggesting
that calibrated algebraic consistency is enforced without degrading
reconstruction quality. Table~\ref{tab:quantitative_results} reports quantitative results averaged over
five random seeds. Ours (P1+P2) achieves the strongest FVM on dSprites and
3DCars while remaining competitive on 3DShapes, with reconstruction comparable
to the duration-matched P1-only control. This comparison shows that Phase~2
preserves reconstruction quality while improving FVM on dSprites and 3DCars.
Thus, the Phase~2 gains are not explained by training duration alone, but reflect
improved latent structure and calibrated non-commutative consistency.

\begin{figure}[t]
    \centering

    \begin{subfigure}[t]{0.49\columnwidth}
        \centering
        \phantomsubcaption\label{fig:cars3d_phase_diag}
        \textbf{\small (\subref{fig:cars3d_phase_diag}) Cars3D}\par
        \vspace{0.15em}
        \includegraphics[width=\linewidth,trim=8 8 8 8,clip]{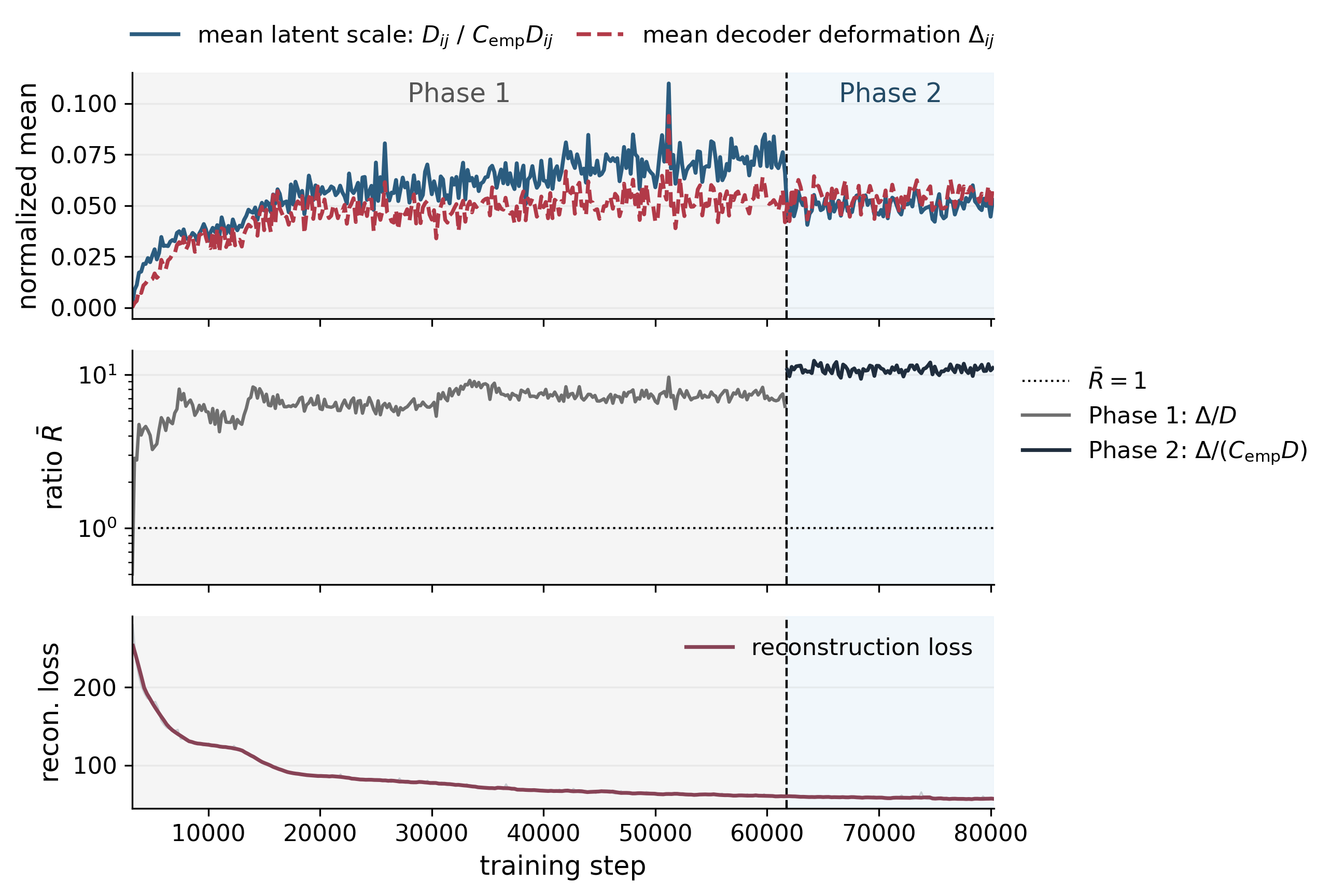}
    \end{subfigure}\hfill
    \begin{subfigure}[t]{0.49\columnwidth}
        \centering
        \phantomsubcaption\label{fig:3dshapes_phase_diag}
        \textbf{\small (\subref{fig:3dshapes_phase_diag}) 3DShapes}\par
        \vspace{0.15em}
        \includegraphics[width=\linewidth,trim=8 8 8 8,clip]{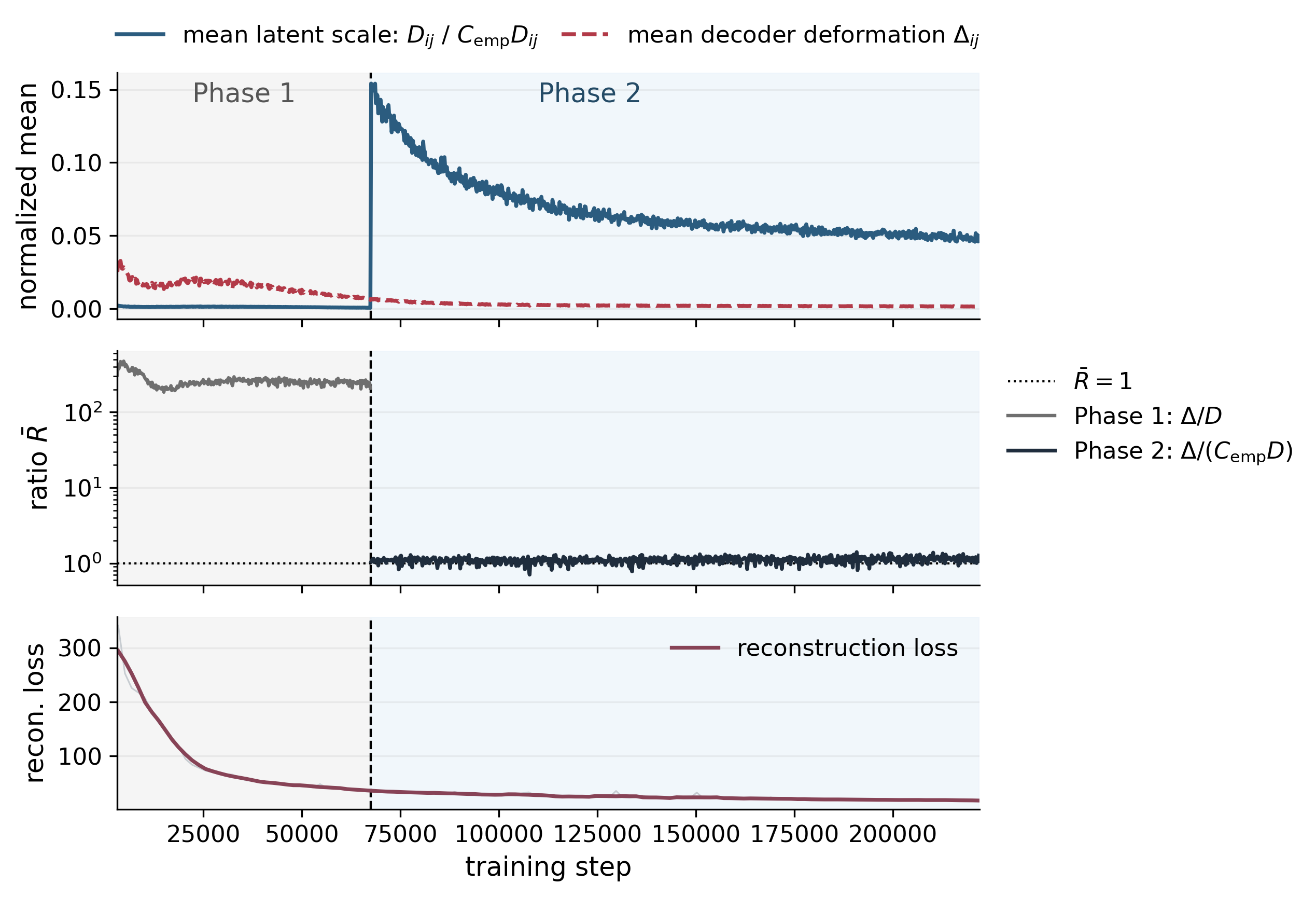}
    \end{subfigure}

    \caption{
    Phase-aware diagnostics on Cars3D and 3DShapes.
    Top: $\Delta_{ij}$ and a consistently scaled latent non-commutativity diagnostic across both phases.
    Middle: deformation--stability ratio $\bar{R}$ with reference boundary $\bar{R}=1$.
    Bottom: reconstruction loss.
    }
    \label{fig:phase_diag_main}
\end{figure}






\subsubsection{Qualitative analysis}

We complement Table~\ref{tab:quantitative_results} with qualitative examples of latent traversals, ordered compositions, and reconstructions, showing that the proposed model yields more coherent and semantically stable transformations across latent and image space.

\vspace{-0.3em}

\vspace{-1em}
\paragraph{Reconstruction quality for 3DShapes.}
We compare reconstructions from the original CLG-VAE and our model on the exact same input samples for 3DShapes in Figure \ref{fig:reconstruction_combined_a}. The first row shows the original images, the second and third rows show reconstructions from CLG-VAE and our model, respectively, and the last two rows show the corresponding absolute error maps. Darker red indicates larger reconstruction error. Our model reconstructs object regions with more faithful colour filling and reduced blur, while also preserving sharper contours and cleaner shape structure.
These qualitative improvements are reflected in uniformly smaller reconstruction errors relative to the original CLG-VAE on the shown examples.
\vspace{-1em}
\paragraph{Reconstruction quality across datasets and model variants.}

Figure~\ref{fig:reconstruction_combined_b} compares reconstructions on dSprites and 3DShapes for Variants~1 and~3. Across both datasets, Variant~3 better matches the ground truth, preserving object identity, shape structure, and fine-grained attributes, while Variant~1 produces blurrier reconstructions with weaker semantic detail. This gap is clearest on dSprites: Variant~1 captures continuous transformations such as translation but collapses discrete factors to a single shape category, whereas Variant~3 preserves both continuous Lie-group transformations and discrete generative factors. Overall, the full model improves reconstruction fidelity and captures the joint continuous--discrete structure more effectively.

\vspace{-1.2em}

\paragraph{Order-dependent composition of translation- and rotation-like latents on dSprites.}
Figure~\ref{fig:dsprites_ellipse_traversal_and_composition} shows the individual traversals used for composition analysis, where \(a\) is translation-like and \(b\) rotation-like, then compares the ordered compositions \(ab\) and \(ba\) on an ellipse sample across traversal strengths. Since transformations are applied in latent group coordinates rather than around a fixed pixel-space point, rotations act about an object-centred implicit anchor, while translations move that anchor across the image. Our model shows a clear order-dependent effect while preserving ellipse identity. This appears in the pixel-wise difference maps and, at larger steps, in centroid-and-axis overlays, where \(ab\) and \(ba\) produce different final centroid locations and orientations. CLG-VAE also shows non-zero order dependence, but its compositions are less semantically stable, with stronger deformation and greater deviation from the ellipse manifold. This is consistent with its objective, which uses only a Hessian penalty (\texttt{hy\_hes}=40, \texttt{hy\_commute}=0), so residual order dependence is not explicitly preserved. Overall, our model preserves non-commutative latent interactions more clearly while maintaining better semantic stability under composition.

\begin{figure*}[t]
    \centering
    \setlength{\tabcolsep}{8pt}
    \renewcommand{\arraystretch}{1.0}

    \begin{tabular}{cc}
        \subcaptionbox{\textbf{3DShapes: CLG-VAE vs Ours.}\label{fig:reconstruction_combined_a}}[0.34\textwidth]{
            \begin{minipage}[t]{0.34\textwidth}
                \vspace{-3.3cm}
                \centering
                \includegraphics[
                    width=\linewidth,
                    height=0.50\textheight,
                    keepaspectratio
                ]{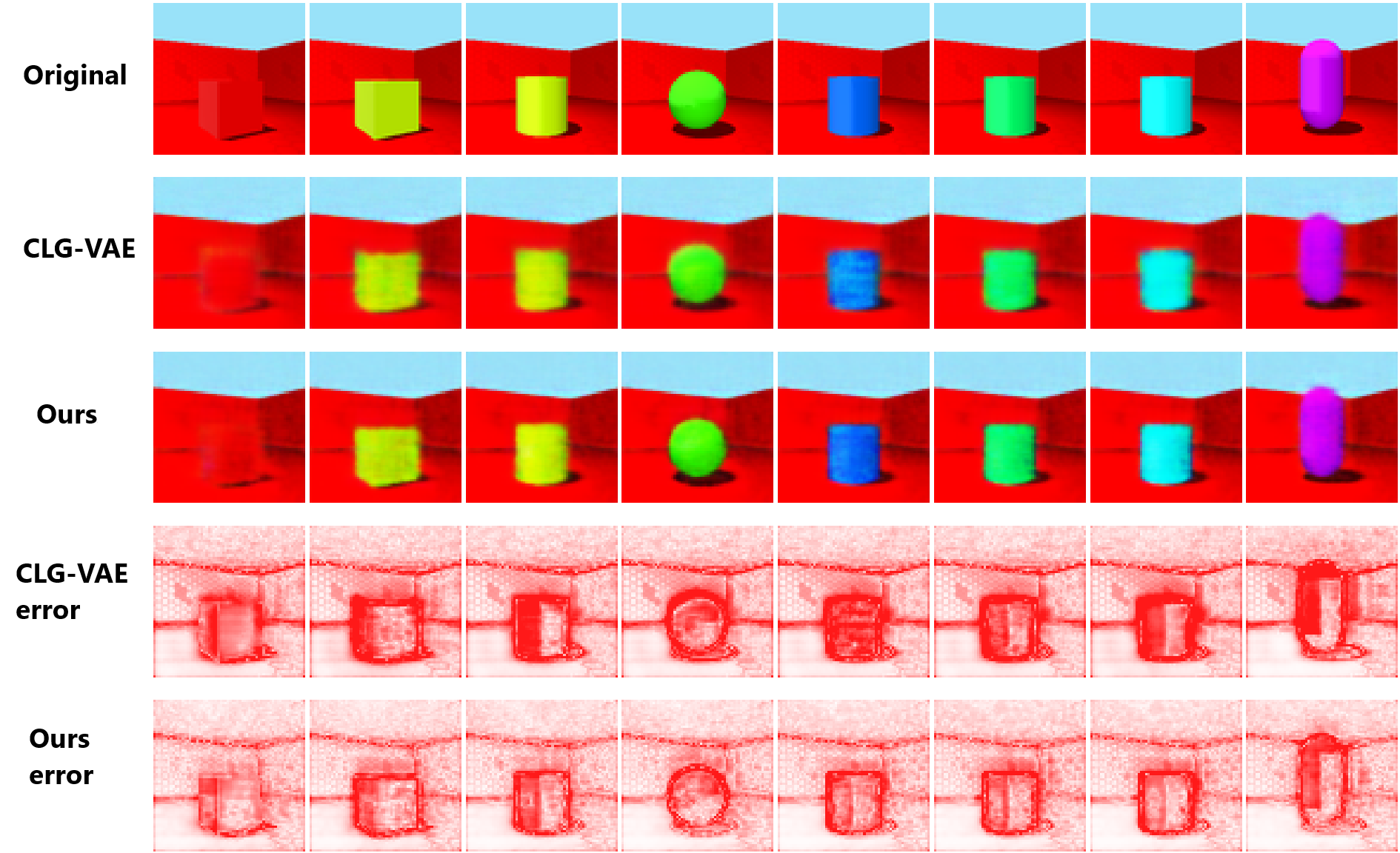}
            \end{minipage}
        }
        &
        \subcaptionbox{\textbf{Reconstruction quality across datasets and variants.}\label{fig:reconstruction_combined_b}}[0.62\textwidth]{
            \begin{minipage}[t]{0.62\textwidth}
                \vspace{0pt}
                \centering

                \begin{minipage}[t]{0.48\linewidth}
                    \centering
                    \includegraphics[
                        width=\linewidth,
                        height=0.22\textheight,
                        keepaspectratio
                    ]{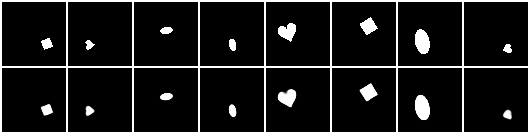}

                    \vspace{0.2em}
                    {\small \textbf{dSprites (Variant~3).}}
                \end{minipage}
                \hfill
                \begin{minipage}[t]{0.48\linewidth}
                    \centering
                    \includegraphics[
                        width=\linewidth,
                        height=0.22\textheight,
                        keepaspectratio
                    ]{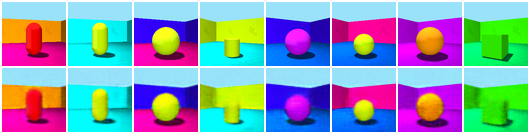}

                    \vspace{0.2em}
                    {\small \textbf{3DShapes (Variant~3).}}
                \end{minipage}

                \vspace{0.3em}

                \begin{minipage}[t]{0.48\linewidth}
                    \centering
                    \includegraphics[
                        width=\linewidth,
                        height=0.22\textheight,
                        keepaspectratio
                    ]{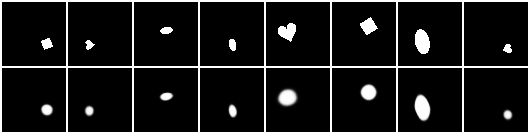}

                    \vspace{0.2em}
                    {\small \textbf{dSprites (Variant~1).}}
                \end{minipage}
                \hfill
                \begin{minipage}[t]{0.48\linewidth}
                    \centering
                    \includegraphics[
                        width=\linewidth,
                        height=0.22\textheight,
                        keepaspectratio
                    ]{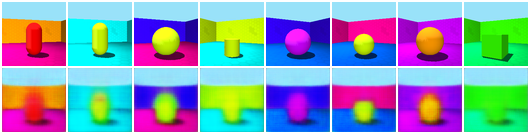}

                    \vspace{0.2em}
                    {\small \textbf{3DShapes (Variant~1).}}
                \end{minipage}
            \end{minipage}
        }
    \end{tabular}

    \caption{Qualitative reconstruction comparisons.
    (a) On 3DShapes, our model reconstructs more accurately than CLG-VAE, with lower error.
    (b) Across dSprites and 3DShapes, Variant~3 (Ours: P1+P2) yields more faithful reconstructions than Variant~1 (LG-VAE: mixed variables).}
    \label{fig:reconstruction_combined}
    \vspace{-1em}
\end{figure*}

\begin{figure*}[t]
    \centering

    \begin{minipage}[t]{0.49\textwidth}
        \vspace{1.0cm}
        \centering
        \includegraphics[
            width=\linewidth,
            keepaspectratio
        ]{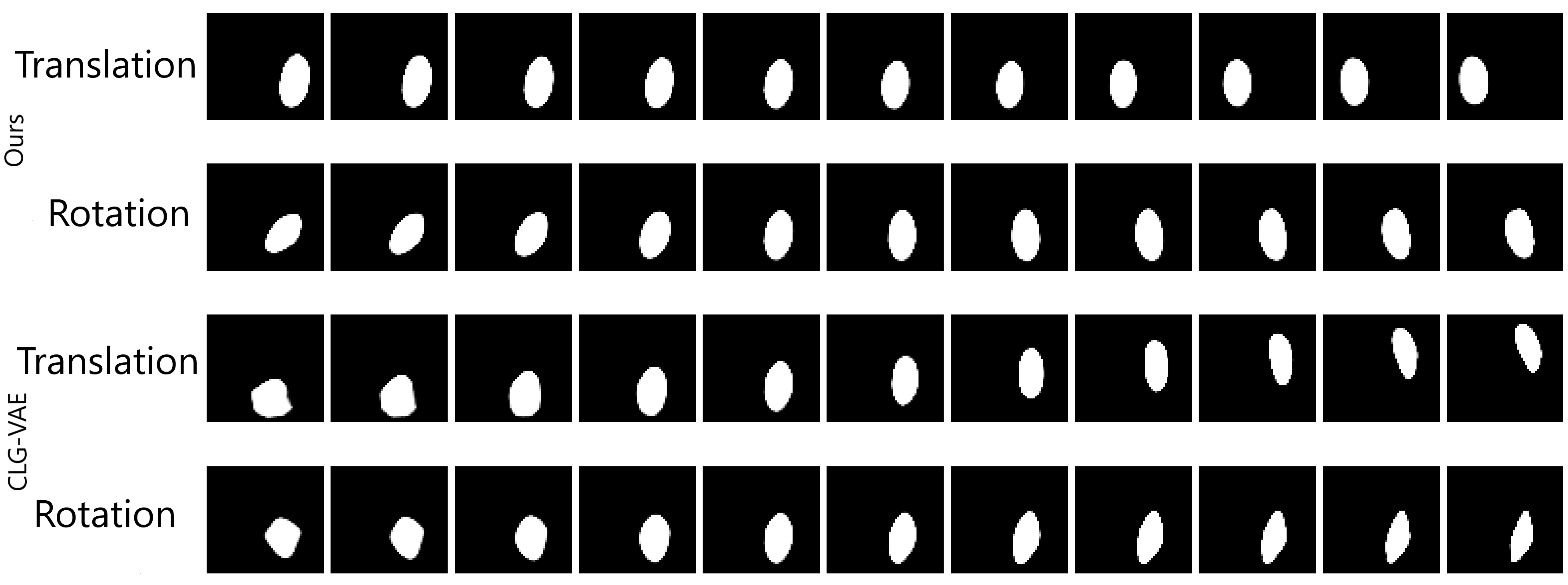}

        \vspace{0.2em}
        {\small\textbf{(A)} Single-latent traversals}
    \end{minipage}
    \hfill
    \begin{minipage}[t]{0.49\textwidth}
        \vspace{0pt}
        \centering
        \includegraphics[
            width=\linewidth,
            keepaspectratio
        ]{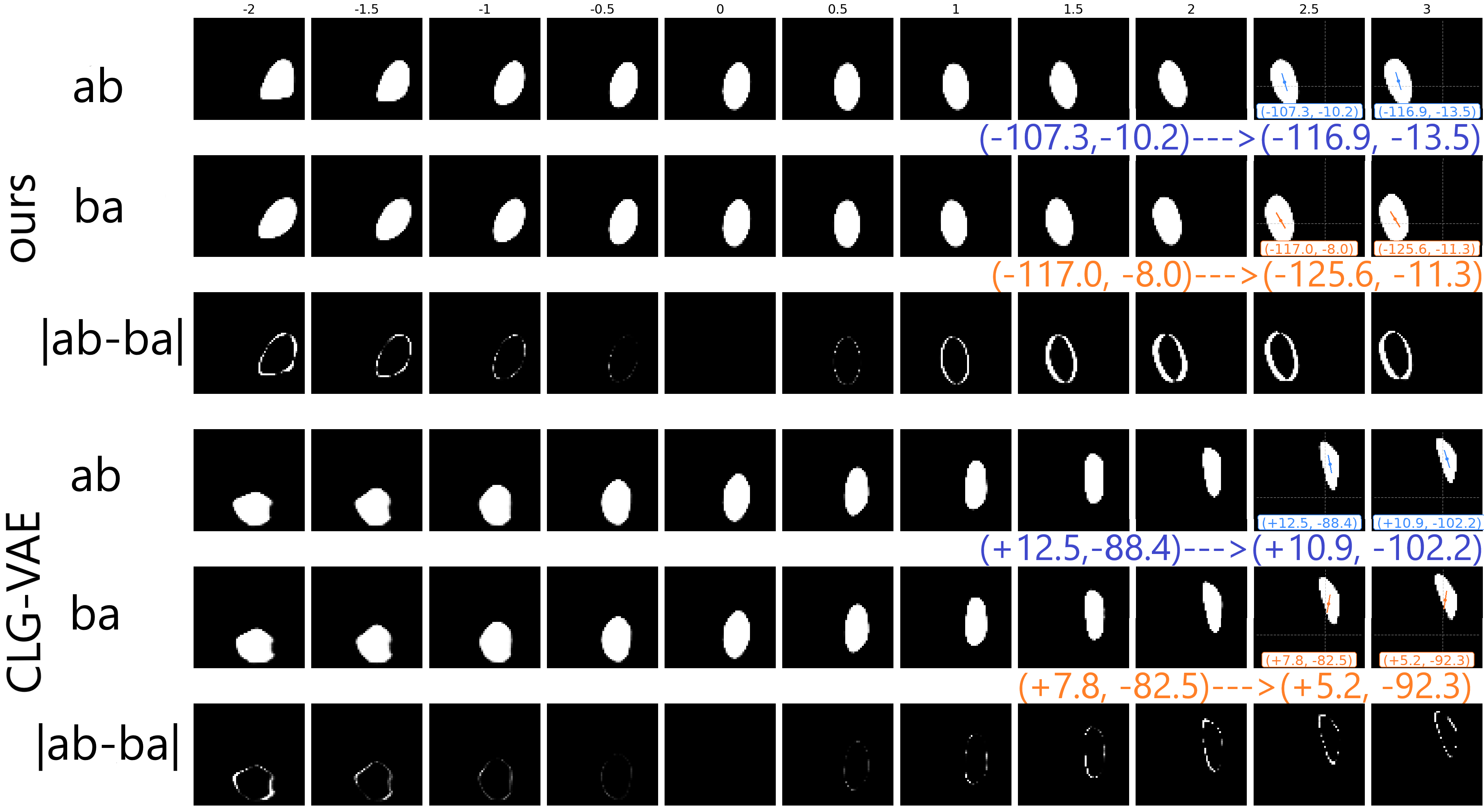}

        \vspace{0.2em}
        {\small\textbf{(B)} Ordered compositions}
    \end{minipage}

\caption{Single-latent traversals and order-dependent composition on dSprites.
\textbf{(A)} Individual latents used in the ellipse analysis, where \(a\) denotes a translation-like latent and \(b\) a rotation-like latent.
\textbf{(B)} Ordered compositions \(ab\), \(ba\), and the pixel-wise difference map \(|ab-ba|\) for the same ellipse sample, with strength swept from \(t=-2\) to \(t=3\). The final two columns show centroid-axis overlays.}

    \label{fig:dsprites_ellipse_traversal_and_composition}
    \vspace{-0.5em}
\end{figure*}

\vspace{-1em}
\subsection{CelebA}

\paragraph{Latent traversals.}
Figure~\ref{fig:shapeon-cfasl-final5} compares latent traversals from our model and CFASL. In each row, the first image is the input and the second is its reconstruction. Qualitatively, our model reconstructs the input more faithfully and yields more identity-preserving, factor-specific traversals. This is also reflected in generative quality: under a shared CelebA FID-50k protocol, our model attains a lower FID than CFASL (66.27 vs.\ 77.59). For example, our model better separates background colour from skin tone, preserves facial identity under hair colour/style changes, and isolates hair colour more cleanly, whereas CFASL exhibits stronger entanglement with face shape, skin tone, eyebrows, hair style, and rotation. We provide additional qualitative analysis of order-dependent latent interactions on CelebA in Appendix~\ref{app:celeba_further_qualitative}, including the example in Figure~\ref{fig:celeba_order_reversal}.

\begin{figure*}[h]
    \centering
    \includegraphics[width=\textwidth]{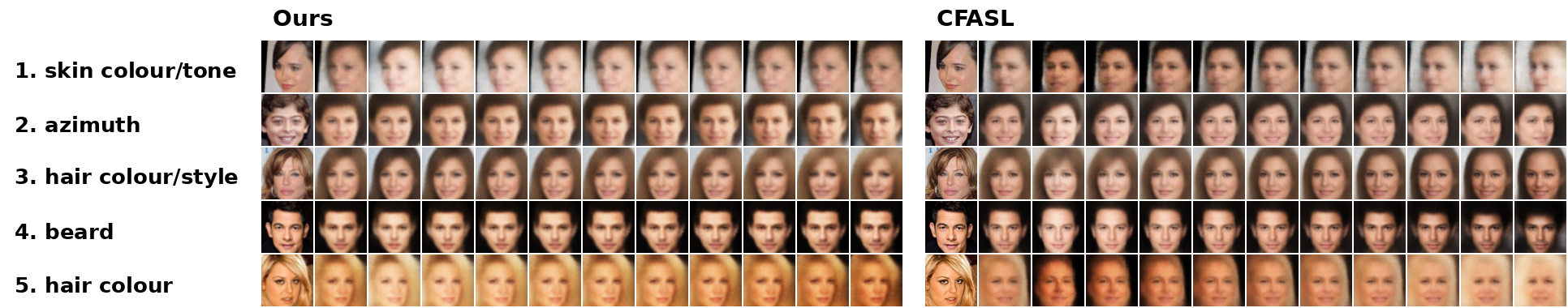}
    \caption{Latent traversal comparison between our model and CFASL.
    In each row, the first image is the input and the second is its reconstruction.}
    \label{fig:shapeon-cfasl-final5}
    \vspace{-1em}
\end{figure*}

\begin{table*}[t]
\centering
\footnotesize
\setlength{\tabcolsep}{5pt}
\renewcommand{\arraystretch}{1.0}
\caption{
Quantitative evaluation averaged over 5 random seeds.
We compare with \(\beta\)-VAE, CLG-VAE, and CFASL.
CLG-VAE is continuous-only, while all Ours variants model both continuous and
discrete factors.
``Ours (P1 only, P1+P2 epochs)'' is a duration-matched control trained for the
same number of epochs as Ours (P1+P2), but without Phase~2 or the
deformation--stability constraint.
}

\label{tab:quantitative_results}

\begin{tabular}{lcccccc}
\toprule
& \multicolumn{2}{c}{dSprites} & \multicolumn{2}{c}{3DShapes} & \multicolumn{2}{c}{3DCars} \\
\cmidrule(lr){2-3} \cmidrule(lr){4-5} \cmidrule(lr){6-7}
Model
& Recon.\(\downarrow\) & FVM\(\uparrow\)
& Recon.\(\downarrow\) & FVM\(\uparrow\)
& Recon.\(\downarrow\) & FVM\(\uparrow\) \\
\midrule
\(\beta\)-VAE
& 18.57 \(\pm\) 0.90 & 0.60 \(\pm\) 0.02
& 22.72 \(\pm\) 2.19 & 0.75 \(\pm\) 0.07
& 59.09 \(\pm\) 1.18 & 0.85 \(\pm\) 0.02 \\

CLG-VAE
& 18.95 \(\pm\) 1.91 & 0.86 \(\pm\) 0.02
& 27.06 \(\pm\) 2.03 & \textbf{0.93 \(\pm\) 0.05}
& 67.58 \(\pm\) 2.71 & 0.91 \(\pm\) 0.03 \\

CFASL
& 23.37 \(\pm\) 1.18 & 0.80 \(\pm\) 0.03
& 22.35 \(\pm\) 1.95 & 0.83 \(\pm\) 0.02
& 66.88 \(\pm\) 0.50 & 0.88 \(\pm\) 0.02 \\

LG-VAE & 61.88 \(\pm\) 0.61 & 0.61 \(\pm\) 0.07
& 67.91 \(\pm\) 9.26 & 0.68 \(\pm\) 0.04
& 87.10 \(\pm\) 1.88 & 0.88 \(\pm\) 0.06 \\

Ours (P1 only, P1 epochs)
& 24.64 \(\pm\) 2.96 & 0.83 \(\pm\) 0.03
& 30.72 \(\pm\) 3.61 & 0.82 \(\pm\) 0.03
& 67.96 \(\pm\) 4.64 & 0.92 \(\pm\) 0.07 \\

Ours (P1 only, P1+P2 epochs)
& \textbf{13.38 \(\pm\) 0.28} & 0.83 \(\pm\) 0.02
& \textbf{15.67 \(\pm\) 0.05} & 0.80 \(\pm\) 0.09
&\textbf{ 58.17 \(\pm\) 2.76} & 0.87 \(\pm\) 0.07 \\

Ours (P1+P2)
& 13.96 \(\pm\) 0.90 & \textbf{0.89 \(\pm\) 0.09}
& \textbf{15.67 \(\pm\) 1.87} & 0.84 \(\pm\) 0.04
& 58.44 \(\pm\) 2.87 & \textbf{0.95 \(\pm\) 0.04} \\

\bottomrule
\end{tabular}
\end{table*}

\section{Conclusions}
\vspace{-1em}
We introduced a diagnostic-driven framework for VAEs that treats
non-commutativity not as a nuisance to be regularized away, but as a measurable
source of uncertainty that should be diagnosed and reflected in reconstruction
behaviour. The proposed framework separates continuous geometric transformations
from discrete generative factors, measures latent algebraic
non-commutativity through finite BCH deviations, and relates it to
decoder-level order sensitivity through a calibrated deformation--stability
constraint. Empirically, our results show that unconstrained training leads to a systematic
mismatch between latent non-commutativity and reconstruction sensitivity. The
proposed Phase~2 regularization reduces this mismatch by aligning decoder
behaviour with the learned latent algebraic structure, while preserving or
improving reconstruction quality. Across the synthetic benchmarks, the method
yields stronger consistency between latent and reconstruction-level behaviour,
clearer order-dependent latent compositions, and more semantically stable
reconstructions under composition. The explicit separation of discrete and
continuous latent variables further prevents categorical variation from being
conflated with geometric transformations. Although the present framework is developed and evaluated in the setting of Lie Group VAEs, the central message extends beyond this specific model family. The central issue we study is
the mismatch between latent transformation structure and uncertainty behaviour
in the presence of non-commutative generative factors. This issue is not unique
to a single model family, but is relevant more generally to learned latent
representations in modern generative modelling, including representation spaces
used in latent generative pipelines. We focus on Lie Group VAEs because they
provide an explicit and mathematically tractable setting in which group
actions, commutators, and reconstruction-level effects can be analysed
rigorously. In this sense, Lie Group VAEs serve as a principled test-bed for
studying how algebraic structure should be reflected in uncertainty
quantification.  While our technical development is specialized to this setting, the broader lesson is that uncertainty estimates in structured latent models
should account not only for local geometry, but also for the algebraic
properties of the transformations acting on the latent space.  Our study has limitations. The pairwise finite-action diagnostics are local and
scale as \(O(d_c^2)\) in the continuous latent dimension \(d_c\), although this
can be mitigated by screening high-commutator pairs. They also do not capture
path-dependent composition effects. An important direction for our future work is therefore to extend the
analysis to trajectory-level descriptions of latent transformations, for example using signature-based
methods, and study analogous uncertainty principles in broader structured generative architectures, including diffusion models built on VAE backbones.






\bibliographystyle{plainnat}
\bibliography{ref}
\newpage

\appendix

\section{Technical appendices and supplementary material}

\subsection{Definitions of Variables }
\label{app:definitions}
Here, $x \in \mathcal{X}$ is the observed data (e.g., an image); $\hat z = E_{\mathrm{img}}(x)$ is the encoder feature before group action; $(\mu, \log\sigma^2) = E_{\mathrm{group}}(\hat z)$ are the parameters of the variational posterior; $t = \mu + \sigma \odot \varepsilon$ with $\varepsilon \sim \mathcal{N}(0,I)$ are Lie-algebra coordinates; $A(t)=\sum_j t_jA_j$ combines generator matrices $A_j$; $G(t)=\exp(A(t))$ is the corresponding group element; $z=\mathrm{vec}(G(t))$ is its vectorized group representation; $e_s=\mathrm{Emb}(z_{\mathrm{disc}})$ is the discrete embedding; $s'=G(t)e_s$ is the decoder input; and $D_{\mathrm{img}}(s')$ denotes the reconstruction.

\subsection{Proof of Theorem 1}
\label{app:theorom 1}
\begin{proof}
Consider a concatenated Jacobian block:
\[
J =
\left[
\frac{\partial x_{\mathrm{rec}}}{\partial t_i},
\frac{\partial x_{\mathrm{rec}}}{\partial t_j}
\right].
\]

Using the chain rule, each column can be expressed as:
\[
\frac{\partial x_{\mathrm{rec}}}{\partial t_k}
=
\frac{\partial D_{\mathrm{img}}}{\partial s'}
\frac{\partial s'}{\partial t_k},
\quad k \in \{i,j\}.
\]

Therefore, the Jacobian block factors as:
\[
J =
\frac{\partial D_{\mathrm{img}}}{\partial s'}
\left[
\frac{\partial s'}{\partial t_i},
\frac{\partial s'}{\partial t_j}
\right]
=
\frac{\partial D_{\mathrm{img}}}{\partial s'} Z,
\]
where
\[
Z =
\left[
\frac{\partial s'}{\partial t_i},
\frac{\partial s'}{\partial t_j}
\right]
\]
is the Jacobian of the Lie-transformed discrete embedding, with the discrete embedding held fixed.

The Gram matrix of \(J\) is:
\[
J^\intercal J
=
Z^\intercal
\left(\frac{\partial D_{\mathrm{img}}}{\partial s'}\right)^\intercal
\left(\frac{\partial D_{\mathrm{img}}}{\partial s'}\right)
Z
=
Z^\intercal M(s') Z,
\]
where
\[
M(s')
=
\left(\frac{\partial D_{\mathrm{img}}}{\partial s'}\right)^\intercal
\left(\frac{\partial D_{\mathrm{img}}}{\partial s'}\right)
\]
is the pullback metric induced by the decoder mapping, similar to the Riemannian geometric framework of \cite{Arvanitidis2017LatentSO}.
The singular values of \(J\) are the square roots of the eigenvalues of
\(J^\intercal J = Z^\intercal M(s') Z\). Therefore:
\[
U_{ij}^{\mathrm{manifold}}
=
\sigma_{\max}(J)
=
\sqrt{\lambda_{\max}(Z^\intercal M(s') Z)}.
\]

This measures the maximum Riemannian distortion of the linear map from the \((t_i,t_j)\) parameter space to the data space. Our approach uses the deterministic pullback metric, unlike Arvanitidis et al.\ (2017), which adds variance Jacobians for Gaussian decoders. 
When combined with the finite non-commutativity diagnostics, this sensitivity measure helps separate geometric decoder distortion from algebraic order effects.
\end{proof}

\subsection{Empirical calibration of $C$}
\label{app:Empirical calibration of C}

\paragraph{Phase-1: Empirical estimation.}
During an initial calibration phase (Phase~1), the model is trained without
enforcing the deformation--stability constraint. Instead, we periodically collect
diagnostic statistics for each unordered pair of latent dimensions $(i,j)$.
For each pair, we measure:
(i) an \emph{algebraic deviation} $D_{ij}$, which quantifies non-commutativity
of the corresponding Lie algebra generators, and
(ii) a \emph{decoder deviation} $\Delta_{ij}$, which measures the discrepancy
between reconstructions obtained by swapping the order of the associated
group actions in the decoder.

Let $\bar D_{ij}$ and $\bar \Delta_{ij}$ denote empirical averages of these
quantities accumulated over multiple diagnostic evaluations and used to
calibrate the scale of the constraint. From these statistics, we form a set of
scale ratios
\begin{equation}
r_{ij} \;=\; \frac{\bar \Delta_{ij}}{\bar D_{ij} + \epsilon},
\end{equation}
where $\epsilon>0$ is a small constant added for numerical stability.
Each ratio $r_{ij}$ provides an empirical estimate of how strongly a unit of
algebraic non-commutativity propagates to the observation space.

We then define the initial scale constant as a high percentile of this
distribution:
\begin{equation}
C_{\text{emp}} \;=\;
\operatorname{Percentile}_{p}\!\left(\{r_{ij}\}_{i<j}\right),
\end{equation}
with $p$ typically set to $90$. This percentile-based estimator yields a
robust calibration of $C$ that is insensitive to outliers while capturing the
typical scale of decoder deviations. The calibrated value is then used in
Phase~2 to set the scale of the deformation--stability constraint.

\paragraph{Phase-2: Constraint enforcement.}
In Phase-2, the calibrated constant $C$ is used to enforce consistency
between algebraic and decoder-level deviations via the hinge-squared penalty \ref{eq:pairwise-hinge-loss}.
The loss is averaged over all latent pairs and weighted by a coefficient
$\lambda_{\mathrm{unc}}$. This formulation penalizes only those pairs for which
algebraic non-commutativity would imply a larger decoder discrepancy than is
empirically observed.

\paragraph{Optional adaptive refinement.}
To account for potential changes in scale during training, we optionally
refine $C$ using a slow outer-loop update. We monitor the fraction of active
constraints,
\begin{equation}
f_{\text{active}}
=
\frac{1}{|\mathcal{P}|}
\sum_{(i,j)\in\mathcal{P}}
\mathbf{1}\!\left[C\,\bar D_{ij} > \bar \Delta_{ij}\right],
\end{equation}
and adjust $C$ multiplicatively to match a target activity level
$f_{\text{target}}$. Updates are applied after an initial freeze period and
clipped to a predefined interval $[C_{\min}, C_{\max}]$.

Overall, $C$ is treated as a data-dependent quantity rather than a fixed
hyperparameter: it is (i) empirically calibrated from diagnostic statistics,
(ii) interpretable as a conversion factor between algebraic and decoder-level
deviations, and (iii) optionally adapted to maintain a controlled level of
constraint activity throughout training.
Our objective is not to maximize reconstruction sensitivity, but to ensure that algebraic non-commutativity in the latent Lie algebra is faithfully expressed in reconstruction space. 

\subsection{Discrete latent losses}
\label{app:discrete_losses}
\paragraph{Mutual information regularization.}
To encourage semantic meaning in the discrete latent variable
$s \in \{0,1\}^K$, we introduce an auxiliary predictor
$Q_\psi$ that attempts to recover the discrete code from a reconstructed image.
Given a reconstruction $\hat x$, the predictor outputs logits
$q_\psi(s \mid \hat x)$.

We maximize a variational lower bound on the mutual information
between the discrete latent and the reconstruction, yielding the loss
\begin{equation}
\mathcal{L}_{\mathrm{MI}}
=
-\mathbb{E}_{x \sim p_{\mathrm{data}}}
\left[
\sum_{s} q(s \mid x)\log q_\psi(s \mid \hat x)
\right],
\end{equation}
where $q(s \mid x)$ is the discrete encoder distribution (implemented via a
Gumbel--Softmax relaxation).
This loss encourages the discrete code to be preserved through the
decoder and prevents it from being ignored. 

\subsection{Dataset details}
\label{app:dataset_details}

We evaluate on dSprites, 3DShapes, 3DCars, and CelebA. dSprites consists of
\(64 \times 64\) binary images generated from five ground-truth factors:
shape, scale, orientation, and \(x\)-/\(y\)-position. 3DShapes consists of
\(64 \times 64\) RGB images generated from six factors: object shape, scale,
orientation, object colour, wall colour, and floor colour. 3DCars consists of
17,568 RGB images of size \(64 \times 64 \times 3\), generated from three
ground-truth factors: elevation with 4 values, azimuth direction with 24 values,
and car model with 183 values. CelebA consists of 202,599 face images. For
CelebA, we crop the center \(128 \times 128\) region and resize it to
\(64 \times 64\) before training and evaluation.

\subsection{Experimental Settings}
\label{app:exp_setting}
For Beta-VAE, CFASL, and CLG-VAE, we use the training protocols and hyperparameter settings from the original papers and official implementations. For CLG-VAE, we follow the official implementation released in \texttt{CommutativeLieGroupVAE-Pytorch}. For CFASL, we likewise use the hyperparameter configuration provided in its official implementation for Commutative Lie Group VAE model (CFASL). We therefore do not repeat the full baseline hyperparameter settings here. 

For our model on the synthetic datasets (dSprites, 3DShapes, and Cars3D), we
retain the core Lie-related parameterization of CLG-VAE and report only the
main settings that differ or are central to our method. We use Adam with
learning rate $2\times 10^{-4}$ and latent dimension $10$. For dSprites and
Cars3D, we use Lie subgroup size $100$, while for 3DShapes we use subgroup size
$400$; the Lie subspace size is $10$ in all three cases, and the Lie algebra
initialization scale is $2\times 10^{-4}$. The phase-aware calibration uses
$C$ percentile $90$, $C$ update learning rate $0.05$, and diagnostic interval
$200$. For the main Phase~1$\rightarrow$Phase~2 runs, we use freeze-$C$ epochs
$10$. We set $C_{\max}=10000$ for dSprites and 3DShapes, and $C_{\max}=200$
for Cars3D. The remaining phase-specific weights are reported in
Table~\ref{tab:phase2_hparams}.
\begin{table}[H]
\centering
\footnotesize
\setlength{\tabcolsep}{4pt}
\caption{
Calibration and key training settings for the representative runs used in the
phase-aware diagnostics.
}
\label{tab:phase2_hparams}
\begin{tabular}{lccc}
\toprule
Hyperparameter & dSprites & 3DShapes & Cars3D \\
\midrule
$C_{\mathrm{emp}}$ & 289.87 & 207.47 & 0.688 \\
Phase 1 epochs & 41 & 41 & 251 \\
Phase 2 epochs & 59 & 89 & 69 \\
$C$ percentile $p_C$ & 90 & 90 & 90 \\
$C$ update lr $\eta_C$ & 0.05 & 0.05 & 0.05 \\
$C_{\min}$ & $10^{-4}$ & $10^{-4}$ & $10^{-4}$ \\
$C_{\max}$ & 10000 & 10000 & 200 \\
Freeze-$C$ epochs & 10 & 10 & 10 \\
Diagnostic interval $K_{\mathrm{diag}}$ & 200 & 200 & 200 \\
Batch size & 256 & 256 & 64 \\
$\lambda_{\mathrm{unc}}$ & 0.0008 & 0.0007 & 0.0001 \\
$\lambda_{\mathrm{MI}}$ & 0.6 & 0.6 & 0.5 \\
$\lambda_{\mathrm{shape}}$ & 0.1 & 0.1 & 0.2 \\
$\lambda_{\mathrm{rec\_shape}}$ & 0.02 & 0.02 & 0.02 \\
$\lambda_{\mathrm{shape\_sep}}$ & 0.001 & 0.001 & 0.001 \\
Gumbel-Softmax temperature $\tau$ & 0.67 & 0.67 & 1.5 \\
Shape embedding dim & 64 & 64 & 32 \\
Number of categories& 3 & 4 & 183 \\
\bottomrule
\end{tabular}

\vspace{2pt}
\parbox{\linewidth}{\footnotesize
\textit{Note.} $C_{\mathrm{emp}}$ denotes the final calibrated value at the end
of Phase~2 training.
}
\end{table}

For CelebA, we use a configuration with Lie subgroup size $400$, Lie subspace size $20$, and Lie algebra initialization scale $5\times10^{-4}$. The discrete latent is implemented as three Gumbel--Softmax categorical variables with category sizes $(3,4,4)$. Additional settings include Gumbel temperature $1.5$, $\lambda_{\mathrm{shape}}=0.05$, $\lambda_{\mathrm{MI}}=0.1$, and $\lambda_{\mathrm{unc}}=10$. The phase-aware calibration uses $C$ percentile $90$, $C$ update learning rate $0.02$, diagnostic interval $200$, freeze-$C$ epochs $5$, and $C_{\max}=200$.

\paragraph{Compute resources.}
All experiments were run on the Berzelius compute cluster using a single NVIDIA A100-SXM4-40GB GPU per run; no multi-GPU training was used. For the full Phase~1+Phase~2 training pipeline, approximate wall-clock times per run were \(17.33 \pm 0.13\) hours for dSprites, \(17.05 \pm 0.18\) hours for 3DShapes, \(1.65 \pm 0.02\) hours for Cars3D, and \(18.56 \pm 0.06\) hours for CelebA (mean \(\pm\) sample standard deviation over five seeds). Reported quantitative results were obtained from five independent random seeds per dataset/model configuration, giving total compute times of \(86{:}39{:}23\), \(85{:}13{:}44\), \(08{:}15{:}22\), and \(92{:}48{:}24\), respectively.

\subsection{Sensitivity analysis}
\label{app:sensitivity}
Sensitivity is evaluated relative to the baseline configuration using reconstruction loss and FVM. As can be seen from Table \ref{tab:sensitivity_3dshapes}, the model is not highly sensitive to reasonable changes in warmup length, constraint percentile, diagnostic frequency, or MI weight, with performance remaining broadly stable across settings.










\begin{table}[H]
\centering
\footnotesize
\caption{
Sensitivity analysis on 3DShapes. Each variant changes one hyperparameter
relative to the baseline configuration. Results are reported as mean
\(\pm\) standard deviation over five seeds.
}
\label{tab:sensitivity_3dshapes}
\begin{tabular}{lccc}
\toprule
\textbf{Variant}
& \textbf{Changed parameter}
& \textbf{Recon. loss} \(\downarrow\)
& \textbf{FVM} \(\uparrow\) \\
\midrule
Baseline
& --
& $15.67 \pm 1.87$
& $0.83856 \pm 0.03611$ \\

Warmup-20
& \(E_{\mathrm{warm}}=20\)
& $15.77 \pm 1.47$& $0.83856 \pm 0.03611$ \\

Warmup-60
& \(E_{\mathrm{warm}}=60\)
& $15.16 \pm 1.72$
& $0.84210 \pm 0.04068$ \\

\(C\)-perc.-70
& \(p_C=70\)
& $18.24 \pm 1.95$
& $0.83524 \pm 0.04247$ \\

\(C\)-perc.-95
& \(p_C=95\)
& $16.77 \pm 1.19$& $0.83856 \pm 0.03611$ \\

Diag.-100
& \(K_{\mathrm{diag}}=100\)
& $16.88 \pm 6.47$
& $0.82620 \pm 0.02489$ \\

Diag.-500
& \(K_{\mathrm{diag}}=500\)
& $17.98 \pm 2.21$
& $0.80774 \pm 0.01379$ \\

\(\lambda_{\mathrm{MI}}=0.25\)
& \(\lambda_{\mathrm{MI}}=0.25\)
& $17.13 \pm 3.54$
& $0.84695 \pm 0.03419$ \\

\(\lambda_{\mathrm{MI}}=0.5\)
& \(\lambda_{\mathrm{MI}}=0.5\)
& $15.91 \pm 2.07$
& $0.83805 \pm 0.04037$ \\

\(\lambda_{\mathrm{MI}}=1.0\)
& \(\lambda_{\mathrm{MI}}=1.0\)
& $16.41 \pm 2.93$
& $0.83090 \pm 0.01664$ \\

\bottomrule
\end{tabular}
\end{table}

\subsection{Additional phase-aware diagnostics}
\label{app:phase_diagnostics_additional}

Figure~\ref{fig:dsprites_phase_diag_appendix_full} shows the corresponding phase-aware diagnostic for dSprites. As in Cars3D, the Phase~2 deformation--stability ratio remains clearly above the reference boundary $\bar{R}=1$, indicating that decoder-level order sensitivity exceeds the calibrated lower bound implied by latent non-commutativity, while reconstruction loss remains stable.

\begin{figure}[t]
    \centering
    \textbf{\small dSprites}\par
    \vspace{0.15em}
    \includegraphics[width=0.70\columnwidth,trim=8 8 8 8,clip]{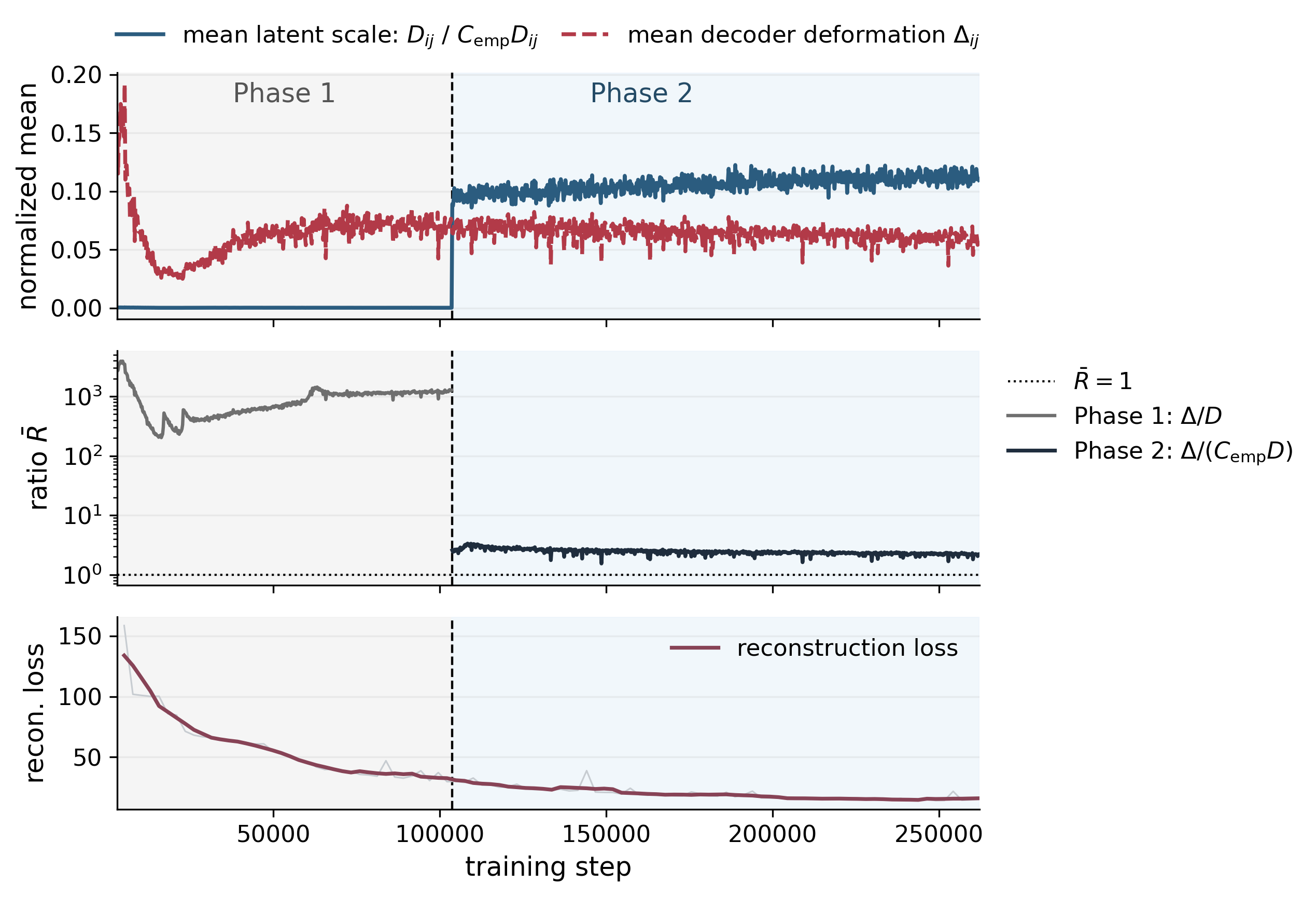}
    \caption{
    Additional phase-aware diagnostic for dSprites.
    Top: $\Delta_{ij}$ and a consistently scaled latent non-commutativity diagnostic across both phases.
    Middle: deformation--stability ratio $\bar{R}$ with reference boundary $\bar{R}=1$.
    Bottom: reconstruction loss.
    }
    \label{fig:dsprites_phase_diag_appendix_full}
\end{figure}






\subsection{CelebA: further qualitative analysis}
\label{app:celeba_further_qualitative}
\paragraph{Order-dependent latent interactions.}
Figure~\ref{fig:celeba_order_reversal} illustrates the non-commutative interaction between two learned latent directions in our model: $z_9$, corresponding to azimuth/pose, and $z_{14}$, corresponding to appearance/illumination. Each triplet applies the same two latent transformations with different step magnitudes but in reversed order. The first two images compare $z_9 \rightarrow z_{14}$ and $z_{14} \rightarrow z_9$, while the third highlights their comparison: yellow indicates overlap, whereas cyan and red indicate order-dependent differences. The red regions emphasize more localized changes around the eyebrows, eyes, nose, and jawline, while the cyan regions reflect broader shifts in the overall face contour and hair-face boundary. These examples suggest that reversing the order of the two latent actions can lead to visibly different decoded outputs.

\begin{figure}[H]
    \centering
    \includegraphics[width=\linewidth]{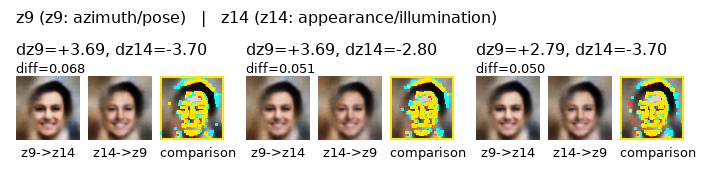}
    \caption{Order-dependent interaction between two learned latent directions.}
    \label{fig:celeba_order_reversal}
\end{figure}



\end{document}